\title{Optimal sensor deception in stochastic environments with partial observability to mislead a robot to a decoy goal}
 \author{Hazhar Rahmani \qquad Mukulika Ghosh \qquad Syed Md Hasnayeen\thanks{
The authors are with the Dept. of Computer Science, Missouri State University, Springfield, MO, USA.  {\tt\small {\{hrahmani, mghosh, sh3739s\}}@missouristate.edu}}}
\let\oldNP\NP
\renewcommand{\NP}{\oldNP\xspace}
\newcommand{\pagebudget}[1]{}
\newcommand{\showtotalpagebudget}[1]{}
\newtheorem{definitionenv}{\bf Definition}
\newtheorem{lemmaenv}{Lemma}
\newtheorem{theoremenv}{Theorem}
\newtheorem{corollaryenv}{Corollary}
\newtheorem{exampleenv}{Example}
\newtheorem{assumptionenv}{Assumption}
\newenvironment{definition}{\vspace{0.05in}\begin{definitionenv}\em}{\end{definitionenv}\vspace{0.05in}}
\newenvironment{lemma}{\vspace{0.05in}\begin{lemmaenv}\em}{\end{lemmaenv}\vspace{0.05in}}
\newenvironment{theorem}{\vspace{0.05in}\begin{theoremenv}\em}{\end{theoremenv}\vspace{0.05in}}
\newenvironment{corollary}{\vspace{0.05in}\begin{corollaryenv}\em}{\end{corollaryenv}\vspace{0.05in}}
\newcommand*{\probleminternal}[4]{
	\par
	\medskip
	\noindent\fbox{\parbox{0.98\columnwidth}{
			\textbf{#4: #1} \\[0.05in]
			\renewcommand{\tabcolsep}{2pt}
			\begin{tabularx}{\linewidth}{rX}
				\emph{Input:} & #2 \\
				\emph{Output:} & #3
			\end{tabularx}
		}}
		\par
		\medskip
		\par
	}
	\newcommand*{\problem}[3]{\probleminternal{#1}{#2}{#3}{Problem}}
	\newcommand*{\decproblem}[3]{\probleminternal{#1}{#2}{#3}{Decision Problem}}
	\newcommand*{\Relbarfill@}{\arrowfill@\Relbar\Relbar\Relbar}
	\newcommand*{\xeq}[2][]{\ext@arrow 0055\Relbarfill@{#1}{#2}}
\newcommand{\deleted}[1]{}
\newcommand{\Yes}{{\rm Yes}\xspace}
\newcommand{\No}{{\rm No}\xspace}
\newcommand{\OSAtoDGM}{{\rm OSA\_DGM}\xspace}
\newcommand{\OSAtoDGMDEC}{{\rm OSA\_DGM-DEC}\xspace}
\newcommand{\KNAPSACK}{{\rm 0/1-KNAPSACK-DEC}\xspace}
\newcommand*{\gobble}[1]{}
\begin{document}

\maketitle

% \hr{
% We completed this step on 01/29/2025.
% Phase 1.
% Steps:
% \begin{itemize}
%     \item Definition of finite-state controllers (Completed)
%     \item Definition of a product automaton made from the POMDP and the finite-state controller (Completed)
%     \item Check if th ILP-based solution we have for MDP will work for this case too or not. (Completed)
% \end{itemize}
% }

% \phase{2}{
%   NP-hardness proof.
% }
% {
%   Case Study.
% }
% {
%   Implementation of POMDP, Finite-State Controller, and the MILP formulation too.
% }

\begin{abstract}
Deception is a common strategy adapted by autonomous systems in adversarial settings. Existing deception methods primarily focus on increasing opacity or misdirecting agents away from their goal or itinerary. In this work, we propose a deception problem aiming to mislead the robot towards a decoy goal through altering sensor events under a constrained budget of alteration. The environment along with the robot's interaction with it is modeled as a Partially Observable Markov Decision Process (POMDP), and the robot's action selection is governed by a Finite State Controller (FSC). Given a constrained budget for sensor event modifications, the objective is to compute a sensor alteration that maximizes the probability of the robot reaching a decoy goal. We establish the computational hardness of the problem by a reduction from the $0/1$ Knapsack problem and propose a Mixed Integer Linear Programming (MILP) formulation to compute optimal deception strategies. We show the efficacy of our MILP formulation via a sequence of experiments.
\end{abstract}

\section{Introduction}
Deception is a common adversarial strategy that can involve concealing intent from adversarial agents \cite{fu-covert-2024, goalobfuscation-2020} or misleading them through sensor manipulation \cite{Laforturne-2022, su2018}. This paper focuses on the latter, where an autonomous agent or robot is deliberately misled toward predetermined decoy goals via systematic sensor alterations. Since modifying sensor information incurs a cost, the system optimizes deception within a constrained budget. Unlike existing approaches that focus on optimizing the placement of decoy goals or honeypots \cite{fu-proactive-2023, milani-ssgattack-2020, anwar-honeypot-2022}, our approach assumes a fixed set of decoy goals and achieves deception by optimally swapping sensor observations (readings). This method has broad applications in autonomous system planning, including security applications and adversarial environments in multi-agent systems.
%\sout{For example, consider a strategic defense scenario where an unauthorized ground robot attempts to infiltrate a protected area while avoiding detection. The adversarial home system can manipulate the sensor observations that the robot rely on, misdirecting them away from the protected area and towards a controlled or isolated region. The cost of manipulating sensor observations can include energy expenditure or the risk posed to authorized friendly agents. }
% \mg{For example, consider a strategic defense scenario in a military base environment as shown in Fig.~\ref{fig:example}. An intruder ground robot, attempts to infiltrate an protected area containing ammunition, located in upper right cell, while avoiding detection. The environment contains 6 beacons, $S1$ to $S6$,  which the ground robot relies on for localization. The  defense system  of the base can alter the beacon identities by spoofing them, effectively misdirecting the robot away from the ammunition area and towards the middle cell containing a guard tower. The cost of beacon spoofing can include energy expenditure or the risk posed to authorized agents. This approach ensures that the deception remains undetected by the ground robot, reducing the risk of the intruder adapting its strategy in response to the environment’s defenses.}
%\hr{
For example, consider a strategic defense scenario in an environment shown in Fig.~\ref{fig:example}. An intruder ground robot, attempts to infiltrate a protected area located in upper right cell, while avoiding detection. The environment contains 6 beacons, $S1$ to $S6$, producing distinct colors, which the ground robot relies on for localization and planning to move toward the protected area. The robot dynamic has stochastically, and so, when it does one of the actions North, East, South, and West, its actuators guarantee with a probability less than $1$ that they will move the robot to the intended cell. The defense system knows the strategy the robot uses for action selection. Accordingly, it alters the beacon identities by spoofing them, effectively misdirecting the robot away from the ammunition area and towards the middle cell containing a guard tower. This spoofing is not online and instead is offline and it is performed only one time, before the agent's execution. It can simply swap the colors produced by different beacons. The cost of beacon spoofing can include energy expenditure or the risk posed to authorized agents, and hence, there is a limited budget for spoofing.
In this paper, we answer the question that in this kind of scenarios, how we alter or swap the colors produced by some beacons to maximize the probability of directing the intruder toward the decoy goal.
%}
%Another example in a game-based scenario is where one team deceives the opposing team by leading them into dead-ends or endless loops, thereby wasting valuable resources such as time. This can be achieved by manipulating navigation cues, such as swapping landmarks or signposts, to misdirect the opposing team away from the objective or prize. The cost of swapping landmarks or signposts can be the time required to carry out the alteration.
Our problem can also help identify vulnerabilities in a sensor network, where a malicious agent manipulates sensors to mislead the system to an unsafe or hazardous state.
\begin{figure}
    \centering
    \begin{tikzpicture}[scale=0.8]
                % Draw grid
                \foreach \x in {0,...,5} {
                    \draw[gray] (\x,0) -- (\x,5);
                    \draw[gray] (0,\x) -- (5,\x);
                }
                %
                % Label axes
                \foreach \x in {0,...,4} {
                    \node[below] at (\x+0.5,0) {\x};
                    \node[left] at (0,\x+0.5) {\x};
                }
                %
                % Draw sensors with different colors and transparency
                \fill[blue!30,opacity=0.5] (0,0) rectangle (1,1); % s0
                \fill[green!30,opacity=0.5] (2,0) rectangle (3,2); % s1
                \fill[yellow!30,opacity=0.5] (0,2) rectangle (2,3); % s2
                \fill[orange!30,opacity=0.5] (4,0) rectangle (5,2); % s3
                \fill[purple!30,opacity=0.5] (0,4) rectangle (2,5); % s4
                \fill[pink!30,opacity=0.5] (4,3) rectangle (5,4); % s5
                \fill[cyan!30,opacity=0.5] (3,4) rectangle (4,5); % s6
                %
                % Mark hazardous cell with a cross
                \draw[red, thick] (2,2) -- (3,3);
                \draw[red, thick] (2,3) -- (3,2);
                %
                % Mark goal cell with a circle
                \draw[green, thick] (4.5,4.5) circle(0.3);
                %
                % Mark start cell with a bold border
                \draw[blue, thick] (0,0) rectangle (1,1);
                %
                % Draw sensor labels
                \node at (0.5,0.5) {$s_0$};
                \node at (2.5,1) {$s_1$};
                \node at (1,2.5) {$s_2$};
                \node at (4.5,1.0) {$s_3$};
                \node at (1,4.5) {$s_4$};
                \node at (4.5,3.5) {$s_5$};
                \node at (3.5,4.5) {$s_6$};
                %
                % Label hazardous, start, and goal cells
                \node[red] at (2.5,2.5) {\textbf{X}};
                %\node[blue, above] at (0.5,1) {\textbf{S}}; % Move start label above s0
                \node[green] at (4.5,4.5) {\textbf{G}};

                % Draw a small robot in cell (1,1)
                %\fill[black] (1.5, 0.5) circle(0.2);
                %\node[white] at (1.5, 0.5) {\textbf{R}};
                \node at (1.5,0.5) {\includegraphics[width=0.6cm]{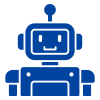}};
            \end{tikzpicture}
    \caption{An example of sensor deception. The agent at bottom left cell is tasked to navigate to the goal at the top right cell. 
    Because of photoelasticity in the robot's dynamic, the current state, the position of the robot, is not observable to the robot. The range sensors provide partial observability.
    The system will alter the sensors to mislead the agent to the decoy goal state in the middle cell containing the security.}
    \label{fig:example}
\end{figure}
%\paragraph{Case Study}
%Figure~\ref{fig:casestudy} presents a case-study example illustrating the sensor deception problem. In this scenario, two teams, $T_1$ and $T_2$, interact within a grid-based environment of size 7x7. Team $T_1$, spy team, consists of a single agent or robot, $R$, tasked with navigating from a designated start location, $Start$, to a goal location, $Goal$, while avoiding  guard agents from team $T_2$. Team $T_2$ comprises two guard agents, $A_1$ and $A_2$, that move randomly throughout the environment. The robot $r$ and the agents $A_1$, $A_2$ can move to any neighboring cell at each step, with their possible actions being $N, S, E,$ and $W$.  
%The environment is equipped with three sensors, $S_1, S_2,$ and $S_3$, each having a perception range of two grid cells including the grid cell in which it resides. These sensors detect the presence of agents from $T_2$ and report an observation to $r$ whenever either $A_1$ or $A_2$ is within their range. Notably, sensors $S_1$ and $S_2$ have overlapping perception regions of 1-cell. To deceive the robot about their true locations, agents $A_1$ and $A_2$ can swap the identification of this pair of sensors, $S1$ and $S2$.
%Existing works define deceptions by misdirection from goal to maximize safety of the system or through placements of decoy goals or honeypots \cite{milani-ssgattack-2020,fu-proactive-2023}. 

In this paper, we consider the sensor deception problem in a stochastic environment by modeling the interaction between an autonomous agent (or robot) and its environment as a Partially Observable Markov Decision Process (POMDP). The robot lacks direct knowledge of its true state and relies on sensor-generated observations. Those observations can be strategically altered by an agent at a cost. The robot selects actions based on its current observations and memory, which we model as a Finite State Controller (FSC) \cite{ahmadi2020stochastic}. The objective is to maximize the probability of misleading the robot to the predetermined decoy goal when it has limited capability of modifying the sensor readings.
%the cost of sensor alterations while ensuring that the agent reaches one of the predetermined decoy goals with a positive probability.
%To compute the probability of misleading the robot toward the decoy goal, under a given sensor alteration, we construct a product automaton from the POMDP and FSC. 
%demonstrating that this computation can be performed in polynomial time. 
We establish the computational hardness of our problem by a reduction from the 0/1 Knapsack problem. Furthermore, we formulate the optimal sensor deception problem as a Mixed Integer Linear Programming (MILP) model, offering a structured and scalable solution for synthesizing deception strategies that achieve optimal cost-efficiency. 

The contributions of this paper are summarized as follows:
\begin{itemize}
\item A novel deception problem formulation, which aims to compute a sensor alteration maximizing the probability of misdirecting a robot two a predetermined decoy goal, under a limited cost budget of sensor alteration.
% \item An efficient method of computing  probabilities of misleading the robot to decoy state through construction of product automaton of POMDP modeling the stochastic environment and FSC generating the robot's action.
\item The $\NP$-hardness of our sensor alteration problem, proved by reduction from Knapsack problem.
\item A scalable MILP-based solution of the sensor alteration problem.
\end{itemize}
%
% In order to validate the proposed MILP solution to the sensor deception problem, we consider a $n \times n$ grid environment with 7 sensors and compute the probabilities of misdirection to decoy state and the sensor alterations under different cost budgets. 
After discussing related work in Section~\ref{sec:rel}, we provide basic definitions and present our problem formulation in Section~\ref{sec:def}. In Section~\ref{sec:pro}, we present verification algorithm. In Section~\ref{sec:har}, we present our $\NP$-hardness result, and in Section~\ref{sec:milp}, we provide our MILP formulation. Section~\ref{sec:case} reports results of our experiments, and Section~\ref{sec:con}, we conclude the paper and draw future research directions.
%Our scalability analysis of the  MILP shows increase in computation time with increase in grid size. 
%
\section{Related Work}
\label{sec:rel}

%\begin{enumerate}
 %   \item The problem of maximizing rechability for POMDPs: Given a set of goals states, compute a policy that maximizes the probability 
 %   of reaching the goal states.
 %   \begin{itemize}
 %       \item When the policy depends on the current belief state: $\pi: \Delta(S) \rightarrow A$
%        \item When the policy is based on a finite state controller. (~\cite{ahmadi2020stochastic}).
%    \end{itemize}
%\end{enumerate}
%

%Chatterjee~\emph{et. al.}~\cite{chatterjee2018sensor} studies the problem of minimally extending the observations of a POMDP whose observation function is partially defined so that a reachability goal is satisfied almost-surely. They show that this problem is NP-hard when the policy is a finite state controller, and then, presents a SAT formulation to give an exact solution of the problem.

%\hr{
%\begin{itemize}
%    \item \href{https://scholar.google.com/citations?hl=en&user=5wgTi1AAAAAJ&view_op=list_works&sortby=pubdate}{Stéphane Lafortune} works on deception problems.
%    \item \href{https://scholar.google.com/citations?hl=en&user=jeNGFfQAAAAJ&view_op=list_works&sortby=pubdate}{Ufuk Topcu} has problems about POMDPs and deception.
%    \item \href{https://scholar.google.com/citations?hl=en&user=B5sj3WEAAAAJ&view_op=list_works&sortby=pubdate}{Jie Fu} has papers about deception.
%    \item \href{https://scholar.google.com/citations?hl=en&user=IVN60_4AAAAJ&view_op=list_works&sortby=pubdate}{Rômulo Meira-Góes}, who was supervised by Stéphane Lafortune, has also papers related to deception.
%    \item \href{https://scholar.google.com/citations?hl=en&user=HDwvJPkAAAAJ&view_op=list_works&sortby=pubdate}{Xiang Yin}, who was also a student of Stéphane Lafortune has related papers.
 %   \item \href{https://scholar.google.com/citations?hl=en&user=WO5qoJEAAAAJ&view_op=list_works&sortby=pubdate}{Sahar Mohajerani} has also related papers.
%\end{itemize}
%}
%Sensor deception has been extensively studied in the context of discrete event systems.  
%
Deception of adversary or opponent by altering sensor information has been achieved in form of opacity \cite{opacity, JACOB2016135} and deceptive control \cite{zhang2022selection, milani-ssgattack-2020}.  Deceptive control is implemented through adding, removal or altering sensors or observations. In this paper, we focus on deception planning. Optimal placement of decoy targets is considered in \cite{fu-proactive-2023, milani-ssgattack-2020} with linear program solution to optimize deception with \cite{fu-proactive-2023} utilizing probabilistic attack graphs compared to deterministic one in \cite{milani-ssgattack-2020}. Goal obfuscation considers hiding true goal among fake goals to implement deception \cite{goalobfuscation-2020, bernardini2020optimization}.  Savas et. al.  use Linear Programming to maximize  adversary teams' deception about its true goal while ensuring the highest probability of achieving its true goal in \cite{Savas_Verginis_Topcu_2022}. The observer’s predictions are formulated as probability distributions over true goals, guided by the maximum entropy principle. Game theory based approaches have been used in deploying decoys for deception in hypergames \cite{kulkarni-decoy-2020, kulkarni-decoylabel-2020, li-games-2023}. Unlike the approaches mentioned above, we model a sensor deception by the alterations of sensors to lead the agent to a finite set of predetermined decoy targets while maintaining the cost incurred by sensor alterations under a limited budget.

Markov Decision Process (MDP) has been widely used in deceptive planning \cite{chen2024deceptive, ufuk-2023, fu-covert-2024, Savas_Verginis_Topcu_2022}. Karabag et al. \cite{ufuk-2023} explore deception by an agent through partial observability of the supervisor, modeling the environment as an MDP. The agent optimizes the generation of a deceptive policy, quantifying deception using Kullback-Leibler (KL) divergence to ensure that the agent's actions remain indistinguishable from the supervisor's expectations while achieving its covert objective. In \cite{ufuk-2025}, the method is extended to team deception, where each agent is modeled as an MDP, and a centralized deceptive policy synthesis is applied while maintaining decentralized execution. Ma et al. \cite{fu-covert-2024} investigate covert deception planning in a stochastic environment, modeled as an MDP, where the agent aims to maximize the discounted total reward while maintaining covert behavior. They demonstrate that finite-memory policies can outperform Markovian policies in this constrained MDP framework and develop a primal-dual gradient-based method for synthesizing optimal and covert Markov policies. In our approach, we model the interaction of the deceived agent with the environment as partially observable Markov Decision process (POMDP) and the selection of action by the agent as finite state controller.

Another closely related work by Meira-Góes et al. in \cite{Lafortune-2019} investigates sensor manipulation in a stochastic environment by modeling the system as a probabilistic automaton. The authors leverage concepts from stochastic games to synthesize an attack policy that maximizes the likelihood of the system being in an unsafe state. In \cite{Laforturne-2022}, the authors extend their work by modeling the interaction between the attacker and the system as an MDP, and also  incorporating penalties into the attack model as a second problem. While these work maximizes the unsafety of the system, our proposed work focuses on sensor manipulation to maximize the likelihood of the agent landing in decoy state.

\section{Definitions and Problem Statement}
\label{sec:def}
To model the interaction of the robot with the environment we use a discrete structure defined as follows.
\begin{definition}
A \emph{partially observable Markov decision process (POMDP)} is a tuple $\mathcal{M}=(S, A, \mathbf{P}, s_0, \Omega, O)$ in which
\begin{itemize}
    \item $S$ is a finite set of states,
    \item $A$ is a finite set of actions,
    \item $\mathbf{P}: S \times A \times S \rightarrow [0, 1]$ is a transition probability function such that for each $s, s' \in S$, $a \in A$, $\mathbf{P}(s, a, s')$ is 
    the probability that the system transitions to state $s'$ by doing action $a$ at state $s$, and it holds that $\Sigma_{s' \in S}\mathbf{P}(s, a, s') = 1$ for all $s \in S$ and $a \in A$,
    \item $s_0$ is the initial state,
    %\item $S_G \subseteq S$ is a set of goal states,
    \item $\Omega$ is a finite set of observations,
    \item $O: S \rightarrow \Omega$ is the observation function such that for each state $s \in S$, $O(s)$ is the observation the system emits when it enters $s$.
\end{itemize}
\label{def:pomdp}
\end{definition}
%

%The POMDP models the interaction of the environment with the robot. 
A POMDP defines how the state of the system changes in response to the robot's actions.
At each step, the robot receives an observation produced by the system based on the current state. Since the same observation could be produced by multiple states, the robot cannot directly observe the current state.
The robot's action selection process is governed by a finite-state controller (FSC), formally defined as follows.

\begin{definition}
    A \emph{finite-state controller} for a POMDP $\mathcal{M}=(S, A, \mathbf{P}, s_0, \Omega, O)$ is a tuple $\mathcal{C} = (N, n_0, \gamma, \delta)$ in which
    \begin{itemize}
        \item $N$ is a finite set of memory nodes,
        \item $n_0$ is the initial node, 
        \item $\gamma: N \times \Omega \rightarrow A$ is the \emph{action selection} function where for each node $n \in N$ and observation $o \in \Omega$, it chooses an action $\gamma(n , o)$ when the controller is in state $n$ and the robot has perceived an observation $o$,
        \item $\delta: N \times \Omega \rightarrow N$ is the memory update function, telling the controller to transition to node $\delta(n, o)$ by receiving observation $o$ at node $n$.
    \end{itemize}
    \label{def:fsc}
\end{definition}

The robot uses the finite-state controller $\mathcal{C} = (N, n_0, \gamma, \delta)$ to choose actions at every time-step. 
At the initial time step $0$, the POMDP is in state $s_0$ and the controller has memory $n_0$. 
Because of partial observability, the robot generally does not explicitly know the current state of the POMDP. However, based on the perceived observation associated with the current state, it can reduce uncertainty about the state of the POMDP.
At each time-step $k \geq 0$, the controller generates action $a_k = \gamma(n_k, O(s_k))$, which must be executed by the robot. Upon executing this action, the POMDP transitions from state $s_k$ to $s_{k+1}$ at time step $k+1$. 
When the POMDP enters $s_{k+1}$, an observation $O(s_{k+1})$ is emitted. 
State $s_{k+1}$ is produced randomly from every state $s'$ for which $\mathbf{P}(s_k, a_k, s') > 0$. 
%based on $\mathbf{s_i, a_j = \gamma(n_k, O(s_i))}$ where $n_k$ is the current state in FSC and $k$ is the number of transitions from $s_0$ to $s_i$ in POMDP. 
% The process of perceiving the observation emitted the system, choosing an action based on that observation and the current memory node of the controller, executing that action by the robot, and transition between the states in POMDP is shown in Figure~\ref{fig:simplePOMDPFSC}.
% State $s_1$ is produced randomly from those states $s'$ for which $\mathbf{P}(s_0, a_0, s') > 0$ based on $\mathbf{s_0, a_0, .}$. The process of perceiving the observation emitted the system, choosing an action based on that observation and the current memory node of the controller, executing that action by the robot, and transition between the states in POMDP is shown in Figure~\ref{fig:simplePOMDPFSC}.
%

%\hr{Include a figure for the POMDP and the controller of the running example, i.e., CaseStudySimple.py.}

In this paper, we study a problem in which the sensors have been attacked by an adversarial agent, causing them to produce incorrect observations.
The purpose of the sensor attack is to mislead the robot into a decoy goal. 
We consider only \emph{alteration} attacks.
 \begin{definition}[Observation alteration]
\label{def:obs_alt}
An observation alteration is a function $\alpha: \Omega \rightarrow \Omega $ where for each
$o \in \Omega$, the robot receives observation $\alpha(o)$ instead of $o$ whenever it is supposed to receive $o$.
\end{definition}
Consider that the sensor attack is in a sense an offline attack rather than an online attack, meaning that it is performed before any system execution. Each attack incurs a cost, defined by a function: Given the observation alteration cost function $c: \Omega \times \Omega  \rightarrow \mathbb{R}_{\geq 0}$, where for each $o_1, o_2 \in \Omega$, $c(o_1, o_2)$ is the cost of altering observation $o_1$ to observation $o_2$. 
%A cost of $c(o_1, o_2) = \infty$ means the adversary agent cannot convert the observation $o_1$ into the observation $o_2$.

Let $\mathbf{A}$ be the set of all sensor alterations defined over $\Omega$. 
The cost associated with a sensor alteration is identified using function $C: \mathbf{A} \rightarrow \mathbb{R}_{\geq 0}$, assigning cost $C(\alpha) = \sum_{o \in \Omega} c(o, \alpha(o))$ to sensor alteration $\alpha$.

%The purpose of the adversarial agent is to find an observation alteration that incurs the minimum cost and mislead the robot to enter a decoy goal
%$S_D \subset S$ if the robot follows a controller $\mathcal{F}$ for choosing its actions.
%
%However, the observation alteration should guarantee that the robot does not receive unexpected observations.

% \begin{definition}
% \hr{This definition needs to be adapted for POMDP.}
% A sensor alteration $A: \Omega \rightarrow \Omega$ designed to alter observations emitted by a POMDP $\mathcal{M}=(S, A, \mathbf{P}, s_0, \Omega, O)$ is said to be \emph{undiscoverable} if for each $o \in \Omega$, $N(o, o)=0$, and for each $s_1, s_2 \in S$ if $N(O(s_1))=O(s_2)$, then for each state $s_3$ for which $\mathbf{P}(s_1, \pi^*(s_2), s_3) > 0$, there exists $s_4 \in S$ such that $\mathbf{P}(s_2, \pi^*(s_2), s_4) > 0$ and $N(O(s_3))=O(s_4)$.
% \end{definition}
%

The adversary's objective is to make an observation alteration that maximizes the probability of misleading the agent to the decoy while ensuring the cost of the observation alteration is no greater than a cost budget.
%
%incurs a minimum cost while ensuring that the robot reaches $S_D$ when the robot uses the reference policy $\pi^*$ with a probability at least $\lambda \in [0, 1]$.
%
%
%In the other variant, the execution of the robot is not limited to a finite time.
%
% \problem{Minimal Sensor Alteration to Decoy Goal Misleading (OSAtoDGM)}
% {A POMDP $\mathcal{M}=(S, A, \mathbf{P}, s_0, \Omega, O)$, a a decoy goal $S_D \subseteq S$, a reference controller $\mathcal{F}=(N, n_0, \gamma, \delta)$, and a cost budget $B \in \mathbb{R}_{\geq 0}$.
% }
% {A sensor alteration $A$ with minimum cost under which the execution of the reference controller $\mathcal{F}$ guarantees reaching the decoy $S_D$  with a probability at least $\lambda$ if such a sensor alteration exists, and otherwise, \textsc{NoSensorAlteration}.}
%
\problem{Optimal Sensor Alteration for Decoy Goal Misleading (OSA\_DGM)}
{A POMDP $\mathcal{M}=(S, \alpha, \mathbf{P}, s_0, \Omega, O)$,  a decoy goal $S_D \subseteq S$, a reference controller $\mathcal{C}=(N, n_0, \gamma, \delta)$, a cost alteration function $c$, and a cost budget $B \in \mathbb{R}_{\geq 0}$.
}
{A sensor alteration \( \alpha \) with cost at most \( B \) for which, $\textbf{Pr}_{\text{reach}}(S_D, \alpha)$ is maximum.
}
% {A sensor alteration \( \alpha \) with cost at most \( B \) , under which the execution of the reference controller \( \mathcal{C} \) maximizes the probability of reaching \( S_D \).}

%\section{Misleading probability of a sensor alteration}
\section{Misleading probability of a sensor alteration}
\label{sec:pro}
In this section, we introduce a product automaton construction, which can be used for a hardness result presented in the next section, and for computing the probability of misleading the robot to a decoy goal on a sensor alteration.
%that a sensor alteration to mislead the robot to a decoy goal.
%
\begin{definition}
\label{def:prod_aut}
    The product of POMDP $\mathcal{M}=(S, A, \mathbf{P}, s_0, \Omega, O)$ and finite-state controller $\mathcal{C} = (N, n_0, \gamma, \delta)$ under sensor alteration $\alpha: \Omega \rightarrow \Omega$ for decoy goal $S_D \subseteq S$ is a tuple $\mathcal{P} = (Q, q_0, \mathbf{T}, Q_G)$ in which
    \begin{itemize}
        \item $Q = S \times N$ is the state space,
        \item $q_0 = (s_0, n_0)$ is the initial state,
        \item $Q_D = S_D \times N$ is the set of goal states, and
        \item $\mathbf{T}: Q \times Q \rightarrow [0, 1]$ is the transition function such that for each states $(s, n), (s', n') \in Q$,
        \begin{align}
   \mathbf{T}((s, n), (s', n')) =\begin{cases}
			\mathbf{P}(s, s', a) & \text{if $\delta(n, \alpha(O(s))) = n'$ }\\
             & \text{and }\\
              & \text{$\gamma(n, \alpha(O(s))) = a$}\\
            0 & \text{otherwise}.
		 \end{cases}
        \end{align}
    \end{itemize}
\end{definition}
Note that this automaton is, in fact, a \emph{goal Markov chain}---a Markov chain with the set of goal states $Q_D$.
Each state of it is a tuple $(s, n)$ in which $s$ is the current state of the world and $n$ is current memory node of the controller. 
The system produces observation $O(s)$ and the attacker alters this sensor reading to $\alpha(O(s))$. The robot receives the modified observation $\alpha(O(s))$ and uses it to select action $a= \gamma(n , \alpha(O(s)))$, generated by the controller. The controller then transitions to node $n' = \gamma(n , \alpha(O(s)))$, and the POMDP transitions from state $s$ to a state $s'$ stochastically, based on $\mathbf{P}(s, n, .)$.

To compute the probability of reaching the decoy, one can introduce a variable $z_q$ for each $q \in Q$, and set $z_q = 1$ if $q \in Q_D$, and otherwise,
\[
 z_q = \sum_{q' \in Q} \mathbf{T}(q, q')z_{q'}. 
\]
By solving this Bellman equation using standard methods, such as the method in Chapter 10 of \cite{baier2008principles}, the probability of misleading the robot to the decoy goal is given by $z_{q_0}$.
%
%
% To compute the probability of reaching the decoy goal, we introduce a variable \( z_q \) for each \( q \in Q \), setting \( z_q = 1 \) if \( q \in Q_D \). Otherwise, we define it as:
%
% \[
% z_q = \sum_{q' \in Q} \mathbf{T}(q, q') z_{q'}.
% \]
%
% By solving this Bellman equation using traditional methods, the probability of misleading the robot to the decoy goal is given by \( z_{q_0} \).
%
% The system generates an observation \( O(s) \), which the attacker alters to \( \alpha(O(s)) \). The robot receives the modified observation \( \alpha(O(s)) \) and uses it to select an action \( a = \gamma(n, \alpha(O(s))) \) according to the controller.  
% %
% The controller then transitions to a new node \( n' = \gamma(n, \alpha(O(s))) \), and the POMDP transitions from state \( s \) to a new state \( s' \) stochastically, based on \( \mathbf{P}(s, n, \cdot) \).
%
% the misleading probability of a sensor alteration \( \alpha \) is the probability that, under the execution of the reference controller \( \mathcal{F} \), the robot reaches the decoy goal \( S_D \) instead of its intended target.
%
\section{Hardness Results}
\label{sec:har}
In this section, we present our hardness result.
%
%
% First, we construct the decision variant of the problem. Next, we construct the product automaton of POMDP and FSC as a Markov chain to verify in polynomial time the decision variant of the problem. Finally, a reduction from 0/1 Knapsack problem is used to complete the hardness proof.

First, we consider the decision variant of our problem.
% \decproblem{Minimal Sensor Alteration to Decoy Goal Misleading (\OSAtoDGMDEC)}
% {A POMDP $\mathcal{M}=(S, A, \mathbf{P}, s_0, \Omega, O)$, a a decoy goal $S_D \subseteq S$, a reference controller $\mathcal{F}=(N, n_0, \gamma, \delta)$, and a threshold $\lambda \in [0, 1]$, and real number $k$.
% }
% {\Yes if there is a sensor alteration $A$ cost at most $k$ under which the execution of the reference controller $\mathcal{F}$ guarantees reaching the decoy $S_D$  with a probability at least $\lambda$, and \No otherwise.}
% %

\decproblem{Optimal Sensor Alteration for Decoy Goal Misleading (\OSAtoDGMDEC)}
{A POMDP $\mathcal{M}=(S, A, \mathbf{P}, s_0, \Omega, O)$, a decoy goal $S_D \subseteq S$, a reference controller $\mathcal{C}=(N, n_0, \gamma, \delta)$, a cost alteration function $c$, a cost budget $B \in \mathbb{R}_{\geq 0}$, and a real number $r \in \mathbb{R}_{\geq 0}$.
}
% {\Yes, if there exists a sensor alteration \( \alpha \) such that \( C(\alpha) \leq B \) and under which the execution of \( \mathcal{C} \) guarantees reaching \( S_D \) with a probability of at least \( r \); and \No, otherwise.}
{\Yes, if there exists a sensor alteration \( \alpha \) such that \( C(\alpha) \leq B \) and $\textbf{Pr}_{\text{reach}}(S_D, \alpha) \geq r$; and \No, otherwise.}
%
% In order to show the polynomial time verification of our problem, we construct a product automaton of our POMDP and FSC, stated in Definition~\ref{def:pomdp} and \ref{def:fsc} respectively, as given below:
%
% %
%
% \begin{lemma}
%     Let $\mathcal{P}_{\mathcal{M}, \mathcal{C}, \alpha} = (Q, q_0, \mathbf{T}, Q_G)$ be the automaton in \ref{def:prod_aut}. For any state $q = (s, n) \in Q$ it holds that 
%     \[
%         \sum_{q' = (s', n') \in Q} \mathbf{T}(s, s') = 1.
%     \]
% \end{lemma}
%
%
% The impact of this lemma is the product automaton $\mathcal{P}_{\mathcal{M}, \mathcal{C}, \alpha}$ is Markov chain with a set of goal states $Q_G$.
% %
% Accordingly, we use this Markov chain for verification of the decision variant of our sensor alteration problem.
%
%
\begin{lemma}
\label{lem:np}
    \OSAtoDGMDEC $\in \NP$.
\end{lemma}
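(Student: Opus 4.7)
The plan is to exhibit the sensor alteration itself as the NP certificate and show that both the cost check and the reach-probability check can be carried out in deterministic polynomial time. Given an instance of \OSAtoDGMDEC, I would take the witness to be the function $\alpha : \Omega \to \Omega$ itself, represented as a table of $|\Omega|$ image values. Since each entry is just an element of $\Omega$, the certificate has size $O(|\Omega|\log|\Omega|)$, which is polynomial in the input. The guessing step of the nondeterministic machine thus fits the size bound required for NP membership.

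For the verifier, the first task is to evaluate $C(\alpha) = \sum_{o \in \Omega} c(o,\alpha(o))$ and test $C(\alpha) \leq B$; this is $|\Omega|$ additions of rationals already present in the input, so it runs in polynomial time. The second task is to compute $\textbf{Pr}_{\text{reach}}(S_D,\alpha)$ and compare it against $r$. Here I would invoke the product construction of Definition~\ref{def:prod_aut}: build the goal Markov chain $\mathcal{P}=(Q,q_0,\mathbf{T},Q_D)$, which has $|Q| = |S|\cdot|N|$ states and whose transition probabilities $\mathbf{T}((s,n),(s',n'))$ are read off directly from $\mathbf{P}$, $O$, $\alpha$, $\gamma$, and $\delta$ in $O(|Q|^2)$ time. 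The reach probabilities $z_q$ then satisfy the Bellman system displayed in Section~\ref{sec:pro}, which is a linear system with rational coefficients of size polynomial in the input. Solving it by Gaussian elimination (after first identifying, via graph reachability, the states from which $Q_D$ is unreachable and fixing their $z$-values to $0$) yields $z_{q_0} = \textbf{Pr}_{\text{reach}}(S_D,\alpha)$ exactly in polynomial time.

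The only subtlety, and the step I expect to be the main thing to justify carefully, is that the numbers produced along the way have polynomially bounded bit-length so that the comparison $z_{q_0} \geq r$ can be made exactly in polynomial time. This is standard: assuming the input probabilities, costs, $B$, and $r$ are given as rationals, Gaussian elimination on a rational matrix of dimension $|Q|$ runs in time polynomial in $|Q|$ and in the bit-length of the entries, and the resulting $z_{q_0}$ is a rational whose numerator and denominator remain polynomially bounded. The final comparison $z_{q_0} \geq r$ is then a single rational comparison. Combining the two checks, the verifier accepts iff $C(\alpha) \leq B$ and $z_{q_0} \geq r$, establishing $\OSAtoDGMDEC \in \NP$.
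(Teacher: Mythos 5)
Your proposal is correct and follows essentially the same route as the paper's proof: the alteration $\alpha$ serves as the certificate, the verifier builds the product goal Markov chain of Definition~\ref{def:prod_aut}, solves the Bellman system for the reachability probability, and checks $C(\alpha)\leq B$ and $z_{q_0}\geq r$. Your additional care about pruning states that cannot reach $Q_D$ before Gaussian elimination and about the bit-length of intermediate rationals only makes explicit what the paper delegates to the standard reference.
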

\begin{proof}
    Let $x : \langle \mathcal{M} :=(S, A, \mathbf{P}, s_0, \Omega, O), \mathcal{C}:=(N, n_0, \gamma, \delta), S_D, B, r  \rangle$ be an instance of \OSAtoDGMDEC. We assume the sensor alteration $\alpha$ is given as the certificate. We need to prove that in polynomial time to the size of $x$ we can verify if $C(\alpha) \leq B$ and
    $\textbf{Pr}_{\text{reach}}(S_D, \alpha) \geq r$. 
    %and that under the sensor alteration $\alpha$, the probability of misleading the robot to reach the decoy goal $S_D$ when the robot use controller $\mathcal{C}$ is at least $r$.

    We construct $\mathcal{P}$ using the construction in Definition~\ref{def:prod_aut}. The running time of this construction is $\mathcal{O}(|S||N||A||\Omega|+|S|^2|N|^2)$, which is polynomial to the size $x$.
    Then, it takes a time polynomial to the size of $\mathcal{P}$, to compute the probability of reaching $Q_G$ in $\mathcal{P}$, which represented $\textbf{Pr}_{\text{reach}}(S_D, \alpha)$.
    %we use standard methods to compute the probability of reaching the goal states $Q_G$ in the Markov chain $\mathcal{P}$. These methods take a time polynomial to the size of the Markov chain $\mathcal{P}$, which is polynomial to the size of $x$.
    %
    % Thus, it takes a polynomial time to check whether the probability is greater than or equal to $r$.
    % %
    % Additionally, it takes polynomial time to check whether $C(\alpha) \leq B$. This completes the proof.
    %
    Trivially, checking whether $\textbf{Pr}_{\text{reach}}(S_D, \alpha) \geq r$ and $C(\alpha) \leq B$ takes a polynomial time, and this completes the proof.
\end{proof}

%An impact of this lemma is that given a sensor alteration, in polynomial time we can compute the probability of misleading the robot to the decoy. 
%

Next, we consider a well known problem.
\decproblem{0/1 Knapsack Problem (\KNAPSACK)}
{$n$ items with weights $W = [w_1, w_2, \cdots, w_n]$ and values $V=[v_1, v_2, \cdots, v_n]$, a knapsack with capacity $P$, a positive real number $L \geq 0$.}
{\Yes if there is a set $I \subseteq \{1, 2, \cdots, n\}$ such that $\sum_{i \in I} w_i \leq P$ and $\sum_{i \in I} v_i \geq L$, and \No otherwise.}
In words, this problem asks whether the knapsack can be filled, either fully or partially, with a subset of items whose total weight is at least \( L \).

Next, we show that our problem is computationally hard.

\begin{theorem}
\label{thr:nphard}
    \OSAtoDGMDEC $\in \NP$-hard.
\end{theorem}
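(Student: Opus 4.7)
The plan is to reduce from \KNAPSACK, which is a well-known \NP-hard problem. Given an arbitrary instance with items of weights $w_1, \ldots, w_n$, values $v_1, \ldots, v_n$, capacity $P$, and threshold $L$, I will construct, in polynomial time, an instance of \OSAtoDGMDEC so that the answer is \Yes on one iff it is \Yes on the other. The guiding idea is to make each item $i$ correspond to exactly one ``interesting'' alteration, namely rewriting a unique observation $o_i$ into a distinguished observation $o_D$ that the reference controller interprets as a signal to head to the decoy. The cost of that single alteration will be $w_i$, and the resulting contribution to the reach probability of $S_D$ will be proportional to $v_i$.

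Concretely, the POMDP $\mathcal{M}$ will have states $\{s_0, s_1, \ldots, s_n, s_D, s_G\}$ with $S_D = \{s_D\}$, observations $\{o_0, o_1, \ldots, o_n, o_D, o_G\}$, and observation function $O(s_0)=o_0$, $O(s_i)=o_i$, $O(s_D)=o_D$, $O(s_G)=o_G$. Letting $V=\sum_i v_i$, a single initial action at $s_0$ transitions to $s_i$ with probability $v_i/V$. The FSC $\mathcal{C}$ has two memory nodes $n_0, n_1$: at $n_0$ with observation $o_0$ it fires the initial action and moves to $n_1$; at $n_1$, if the perceived observation is $o_D$ it takes an action that deterministically moves to $s_D$, and otherwise it takes an action that deterministically moves to $s_G$. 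The cost function is set so that $c(o,o)=0$ for every $o$, $c(o_i, o_D)=w_i$ for each $i$, and $c(o,o')=B+1$ for any other pair, thus making all alterations other than the ``identity or push $o_i$ to $o_D$'' choices infeasible. Finally, set $B = P$ and $r = L/V$.

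Correctness follows because every feasible alteration $\alpha$ corresponds to a subset $I = \{i : \alpha(o_i) = o_D\} \subseteq \{1,\ldots,n\}$, with $C(\alpha) = \sum_{i\in I} w_i$ and $\textbf{Pr}_{\text{reach}}(S_D,\alpha) = \sum_{i\in I} v_i/V$. Hence $C(\alpha)\le B$ and $\textbf{Pr}_{\text{reach}}(S_D,\alpha)\ge r$ hold iff $\sum_{i\in I} w_i \le P$ and $\sum_{i\in I} v_i \ge L$, i.e., iff $I$ is a witness for the Knapsack instance. The construction clearly runs in time polynomial in the bit-length of the input (the only rationals involved are $v_i/V$ and $L/V$, all of polynomial bit-size), so this yields a polynomial-time many-one reduction and \OSAtoDGMDEC inherits \NP-hardness.

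The main obstacle, and the place to be careful in the write-up, is ensuring that the encoding genuinely forces a bijection between feasible alterations and subsets of items. This is handled by the prohibitive cost $B+1$ on every unwanted alteration (including alterations of $o_0$, $o_D$, or $o_G$), which rules out any alteration not of the described form, and by giving each item its own dedicated observation $o_i$ so that the choices are independent. A secondary point worth flagging is that the reduction produces rational transition probabilities and a rational threshold; since all denominators equal $V$, the instance size remains polynomial in the original Knapsack encoding. Combined with Lemma~\ref{lem:np}, this establishes that \OSAtoDGMDEC is in fact \NP-complete, although the theorem only asserts \NP-hardness.
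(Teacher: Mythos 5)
Your proposal is correct and follows essentially the same route as the paper's proof: a polynomial-time reduction from \KNAPSACK in which item $i$ corresponds to altering a dedicated observation $o_i$ (at cost $w_i$) into a distinguished observation that flips the controller's action and sends probability mass $v_i/V$ into the decoy, with $B=P$ and $r=L/V$. The only differences are cosmetic---the paper routes the ``trigger'' observation through an auxiliary state $s_{\clubsuit}$ reached with probability $\tfrac{1}{2}$ and uses cost $\infty$ for forbidden alterations where you use the decoy's own observation and cost $B+1$---so no further comparison is needed.
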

\begin{proof}
    By reduction from the 0/1 Knapsack problem.
    %
    % The idea is to construct a state for each item with a distinct observation for each, and each is reached from an initial state with probability proportional to its weight.
    % %
    % Each enters a state $s_{\top}$ with probability $1.0$ by action $a$, and enters a decoy state $s_{\bot}$ with probability $1.0$ by action $b$.
        
   Given an instance 
    \[
    x = \langle W := [w_1, \cdots, w_n], V := [v_1, \cdots, v_n], P, L \rangle
    \]
     of the \KNAPSACK problem, we construct an instance
     \[
     y = \langle \mathcal{M}, S_D := \{s_{\bot}\}, \mathcal{C}, c, B, r \rangle
     \]
     of \OSAtoDGMDEC in which for $\mathcal{M} = (S, A, \mathbf{P}, s_0, \Omega, O)$, 
     \begin{itemize}
         \item $S = \{s_0\} \cup \{s_i \mid i \in \{1, \cdots, n\} \} \cup \{s_{\clubsuit}\} \cup \{s_{\top} \} \cup \{s_{\bot}\}$ 
         %is the state space, having a state $s_i$ for each item along with an initial state $s_0$, a decoy state $s_{\bot}$, a target state $s_{\top}$, and a state $s_{\clubsuit}$, 
         %\item $A = \{a, b\}$ is the action set,
         \item $A = \{a, b\}$ % is the action set,
         %\item $s_0$ is the initial state,
         \item $\Omega = \{o_0\} \cup \{o_i \mid i \in \{1, \cdots, n\}\} \cup \{o_{\clubsuit}\} \cup \{o_{\top}\} \cup \{o_{\bot}\}$ 
         % is the set of observations, each produced by a single state in $\mathcal{M}$, 
         %
         \item for each $j \in \{0\} \cup \{1, \cdots, n\} \cup \{\bot, \top, \clubsuit\}$,
         \[
           O(s_j) = o_{j},
         \]
         % \item the observation function $O$ is defined as follows
         % \begin{itemize}
         %     \item $O(s_0) = o_0$,
         %     \item $O(s_i) = o_i$ for each $i \in \{1, \cdots, n\}$,
         %     \item $O(s_{\clubsuit}) = o_{\clubsuit}$
         %     \item $O(s_{\top}) = o_{\top}$
         %     \item $O(s_{\bot}) = o_{\bot}$
         % \end{itemize}
         % \item $\mathbf{P}$ is defined such that for each $s, s' \in S$ and $t \in A$,
         % \begin{itemize}
         %     \item $\mathbf{P}(s, t, s') = \frac{v_i}{2\sum_{j=1}^n v_j}$ if $s= s_0$, $t = a$, and $s' = s_i$ for an integer $1 \leq i \leq n$,
         %     \item $\mathbf{P}(s, t, s') = \frac{1}{2}$ if $s= s_0$, $t = a$, and $s' = s_{\clubsuit}$,
         %     \item $\mathbf{P}(s, t, s') = 1$ if $s = s' = s_0$ and $t = b$,
         %     \item $\mathbf{P}(s, t, s') = 1$ if $s = s_i$ for an integer $1 \leq i \leq n$, $t = a$, and $s' = s_{\top}$, 
         %     \item $\mathbf{P}(s, t, s') = 1$ if $s = s_i$ for an integer $1 \leq i \leq n$, $t = b$, and $s' = s_{\bot}$,
         %     \item $\mathbf{P}(s, t, s') = 1$ if $s = s_{\clubsuit}$, $t = b$, and $s' = s_{\top}$,
         %     \item $\mathbf{P}(s, t, s') = 1$ if $s = s_{\clubsuit}$, $t = a$, and $s' = s_{\bot}$,
         %     \item $\mathbf{P}(s, t, s') = 1$ if $s = s' = s_{\top}$ and $t \in \{a, b\}$,
         %     \item $\mathbf{P}(s, t, s') = 1$ if $s = s' = s_{\bot}$ and $t \in \{a, b\}$,
         %     \item and $\mathbf{P}(s, t, s') = 0$ otherwise,
         % \end{itemize}
         \item for each $s, s' \in S$ and $t \in A$,
         \begin{multline*}
             \mathbf{P}(s, t, s') = \\
             %\vspace{5pt}
        %\hspace*{10pt}
        \begin{cases}
            \frac{v_i}{2\sum_{j=1}^n v_j} & \text{if } (s = s_0, t = a, s' = s_i) \text{ for } 1 \leq i \leq n \\
            \frac{1}{2} & \text{if } (s = s_0, t = a, s' = s_{\clubsuit}) \\
            1 & \text{if } (s = s_0, t = b, s' = s_0) \\
            1 & \text{if } (s = s_i, t = a, s' = s_{\top}) \text{ for } 1 \leq i \leq n \\
            1 & \text{if } (s = s_i, t = b, s' = s_{\bot}) \text{ for } 1 \leq i \leq n \\
            1 & \text{if } (s = s_{\clubsuit}, t = b, s' = s_{\top}) \\
            1 & \text{if } (s = s_{\clubsuit}, t = a, s' = s_{\bot}) \\
            1 & \text{if } (s = s_{\top}, s' = s_{\top}, t \in A)   \\
            % &  \text{or } (s = s_{\bot}, s' = s_{\bot}, t \in A), \\
            1 & \text{if } (s = s_{\bot}, s' = s_{\bot}, t \in A)   \\
            0 & \text{otherwise}
        \end{cases}
         \end{multline*}
     \end{itemize}
    and for $\mathcal{C} = (N, n_0, \gamma, \delta)$, we have 
    \begin{itemize}
        \item $N = \{n_0, n_1, n_2 \}$,
        %\item $n_0$ is the initial memory node,
        % \item $\gamma$ is defined as follows: For each $n \in N$ and $o \in \Omega$,
        % \begin{itemize}
        %     \item $\gamma(n, o) = a$ if $n=n_0$ and $o=o_0$,
        %     \item $\gamma(n, o) = a$ if $n=n_1$ and $o=o_i$ for an integer $i \in \{1, \cdots, n\}$,
        %     \item and otherwise, $\gamma(n, o) = b$.
        % \end{itemize}
        \item For each $n \in N$ and $o \in \Omega$,
        \begin{align*}
            \gamma(n, o) =
        \begin{cases}
            a & \text{if } n = n_0, o = o_0    \\
            a & \text{if } n = n_1, o = o_i \text{ for } 1 \leq i \leq n \\
            b & \text{otherwise}.
        \end{cases}
        \end{align*}
        \item $\delta$ is defined as follows: For each $n \in N$ and $o \in \Omega$,
        \begin{itemize}
            \item $\delta(n, o) = n_1$ if $n_0$ and $o = o_0$,
            %\item $\delta(n, o) = n_2$ if $n = n_1$,
            \item and otherwise, $\delta(n, o) = n$
        \end{itemize}
        \item for each $n \in N$ and $o \in \Omega$,
        \begin{align*}
            \delta(n, o) =
        \begin{cases}
            n_1 & \text{if } n = n_0, o = o_0, \\
            n_2 & \text{otherwise}
        \end{cases}
        \end{align*}
    $S_D = \{s_{\bot}\}$, 
    \end{itemize} \
    we define $c$ such that for each $o, o' \in \Omega$,
    % \begin{itemize}
    %     \item $c(o, o')=0$ if $o=o'$,
    %     \item $c(o, o') = w_i$ if $o=o_i$ for an integer $i \in \{1, \cdots, n\}$ and $o'=o_{\clubsuit}$,
    %     \item and otherwise, $c(o, o')=\infty$,
    % \end{itemize}
    \begin{align*}
        c(o, o') =
        \begin{cases}
            0 & \text{if } o = o' \\
            w_i & \text{if } o = o_i \text{ for } 1 \leq i \leq n, o' = o_{\clubsuit} \\
            \infty & \text{otherwise}
        \end{cases}
    \end{align*}
    and we set
    $B = P$ and $r = \frac{L}{2 \sum_{i=1}^n v_i}$.

    Clearly, this reduction takes a polynomial time, and hence, we only need to show that the reduction is correct, i.e., \KNAPSACK produces \Yes for $x$  iff  \OSAtoDGMDEC produces \Yes for $y$.
    
    ($\Rightarrow$) Assume that \KNAPSACK produces \Yes for $x$, that is, there exists set $I \subseteq \{1, \cdots, n\}$ such that  $\sum_{i \in I} w_i \leq P$ and $\sum_{i \in I} v_i \geq L$.
    We construct the sensor alteration $\alpha$ such that for each observation $o \in \Omega$, 
    %$\alpha(o) = o_{\clubsuit}$ if $o=o_i$ for an integer $i \in I$, and otherwise, $\alpha(o) = o$.
    %
    \begin{align*}
    \alpha(o)=
        \begin{cases}
            a_{\clubsuit} & \text{if } o=o_i \text{ for an integer } i \in I \\
            o & \text{otherwise}
        \end{cases}
    \end{align*}
    This means when the system enters any state $s_i$ for $i \in I$, the robot is deceived to think it is in $s_{\clubsuit}$, causing the controller to give action $b$ instead of $a$ to the robot to execute, and by executing that action, the system enters $s_{\bot}$.
    Because $s_{\bot}$ is reached only by doing action $b$ from the $s_j$'s for $j \in \{1, \cdots, n\}$, the probability of reaching the decoy goal $\{s_{\bot}\}$ under sensor alteration $\alpha$ is computed
    \[
   \textbf{Pr}_{\text{reach}}(S_D, \alpha)=\sum_{i \in I} \frac{v_i}{2\sum_{j=1}^nv_j} = \frac{\sum_{i \in I} v_i}{2\sum_{j=1}^nv_j}.
    \]
    , and because $\sum_{i \in I} v_i \geq L$,
    \begin{align} \label{eq:pr_reach_gt_r}
        \textbf{Pr}_{\text{reach}}(S_D, \alpha) \geq \frac{L}{2\sum_{j=1}^nv_j} = r.
    \end{align}
    Also,
    \begin{align} \label{eq:C_alphba}
        C(\alpha) &= \sum_{o \in \Omega} c(o, \alpha(o)) \notag \\
           &= \sum_{o \in \Omega \setminus \{s_i \mid i \in I\} } c(o, \alpha(o))+\sum_{i \in I} c(o_i, \alpha(o)) \notag \\
           &= \sum_{o \in \Omega \setminus \{s_i \mid i \in I\} } c(o, o)+\sum_{i \in I} c(o_i, o_{\clubsuit}) \notag \\
           &= \sum_{o \in \Omega \setminus \{s_i \mid i \in I\} } 0+\sum_{i \in I} w_i \notag \\
           &=\sum_{i \in I} w_i,
    \end{align}
    and because $\sum_{i \in I} w_i \leq P$ and that $B=P$, it holds that $C(\alpha) \leq B$.
    This combined with (\ref{eq:pr_reach_gt_r}) implies that \OSAtoDGMDEC produces \Yes for $y$.  %the constructed $\alpha$ is a sensor alteration that produces \Yes for the instance $y$ of \OSAtoDGMDEC. 

    ($\Leftarrow$) Assume \OSAtoDGMDEC produces \Yes for $y$, meaning that, there is a sensor alteration $\alpha$ for which $\textbf{Pr}_{\text{reach}}(S_D, \alpha) \geq r$ and that $C(\alpha) \leq B$.
    State $s_{\bot}$ can be reached either by doing action $b$ at the $s_i$'s for $i \in \{1, \cdots, n\}$ or by doing $a$ at $s_{\clubsuit}$. 
    The cost of altering $o_{\clubsuit}$ to any other observation is $\infty$. Therefore, if the system enters $s_{\clubsuit}$, it will certainly perform action $b$, causing it to transition to $s_{\top}$ rather than $s_{\bot}$ in the next time step.
    Thus, $s_{\bot}$ cannot be reached from $s_{\clubsuit}$.
    The system enters $s_{\bot}$ from a state $s_i$ for $i \in \{1, \cdots, n\}$ only when $o_i$ is mapped to $o_{\clubsuit}$ by $\alpha$.
    Because $\textbf{Pr}_{\text{reach}}(S_D, \alpha) \geq = r$, there must be a set $I \subseteq $ such that among all the $o_i$'s, only those for which $i \in I$, it has been set $\alpha(o_i) = o_{\clubsuit}$.
    By assumption $C(\alpha) \leq B$, and since $\sum_{i \in I} w_i$ by (\ref{eq:C_alphba}), and that $B = P$, it holds that $\sum_{i \in I} w_i \leq P$.
    Also, because
    \begin{align}
        \mathbf{Pr}_{\text{reach}}(S_D, \alpha) = \sum_{i \in I} \frac{v_i}{\sum_{j=1}^n w_j} \geq r = \frac{L}{\sum_{j=1}^n w_j},
    \end{align}
    it holds that $\sum_{i \in I} v_i \geq L$.
    This combined with that $\sum_{i \in I} w_i \leq P$ proves that $I$ yields \Yes for instance $x$ of \KNAPSACK.
\begin{figure}[ht!]
    \centering

    % Subfigure 1: Items with weights and values (Input to the problem)
    \begin{subfigure}{0.35\textwidth}
        \centering
        \begin{tikzpicture}[scale=0.6]

            % Display the items outside the knapsack (input to the problem)
            
            % Item 1 (w=1, v=20)
            \fill[blue!50] (0,0) rectangle (1,1);
            \draw[thick] (0,0) rectangle (1,1);
            \node at (0.5,0.5) {\textbf{1}};
            \node at (0.5,-0.5) {\small 20};

            % Item 2 (w=2, v=30)
            \fill[red!50] (1.5,0) rectangle (2.5,2);
            \draw[thick] (1.5,0) rectangle (2.5,2);
            \node at (2,1) {\textbf{2}};
            \node at (2,-0.5) {\small 30};

            % Item 3 (w=3, v=40)
            \fill[green!50] (3,0) rectangle (4,3);
            \draw[thick] (3,0) rectangle (4,3);
            \node at (3.5,1.5) {\textbf{3}};
            \node at (3.5,-0.5) {\small 40};

            % Item 4 (w=4, v=50)
            \fill[orange!50] (4.5,0) rectangle (5.5,4);
            \draw[thick] (4.5,0) rectangle (5.5,4);
            \node at (5,2) {\textbf{4}};
            \node at (5,-0.5) {\small 50};

            % Item 5 (w=5, v=60)
            \fill[purple!50] (6,0) rectangle (7,5);
            \draw[thick] (6,0) rectangle (7,5);
            \node at (6.5,2.5) {\textbf{5}};
            \node at (6.5,-0.5) {\small 60};

            %\fill[red!50] (8.5,0) rectangle (9.5,7);
            \draw[thick] (8.5,0) rectangle (9.5,7);
            \node[rotate=90] at (9,3.5) {Knapsack with capacity 7};
            %\node at (9,-0.5) {\small 40};

        \end{tikzpicture}
        \caption{}
    \end{subfigure}
    \hfill
    % Subfigure 2: Knapsack with selected items (Solution)
    \begin{subfigure}{0.1\textwidth}
        \centering
        \begin{tikzpicture}[scale=0.6]

            % Knapsack representation (solution with chosen items)
            \draw[thick] (0,0) rectangle (1,7) node[pos=.5] {};

            % Item 1 (w=1, v=20) inside the knapsack
            \fill[blue!50] (0,0) rectangle (1,1);
            \draw[thick] (0,0) rectangle (1,1);
            \node at (0.5,0.5) {\textbf{1}};
            \node at (-0.7,0.5) {\small 20};

            % Item 2 (w=2, v=30) inside the knapsack
            \fill[red!50] (0,1) rectangle (1,3);
            \draw[thick] (0,1) rectangle (1,3);
            \node at (0.5,2) {\textbf{2}};
            \node at (-0.7,2) {\small 30};

            % Item 4 (w=4, v=50) inside the knapsack
            \fill[orange!50] (0,3) rectangle (1,7);
            \draw[thick] (0,3) rectangle (1,7);
            \node at (0.5,5) {\textbf{4}};
            \node at (-0.7,5) {\small 50};

        \end{tikzpicture}
        \caption{}
    \end{subfigure}
    \begin{subfigure}{0.35\textwidth}
        \centering
        \begin{center}
    \begin{tikzpicture}[shorten >=1pt, node distance=1.8cm, on grid, auto, initial text=, every state/.style={draw, minimum size=0.5cm}]
        % States
        \node[state] (s0) {$s_0$};
        \node[state] (s3) [below=of s0, shift={(0,0cm)}] {$s_3$};
        \node[state] (s2) [left=of s3, shift={(0.7cm,0)}] {$s_2$};
        \node[state] (s1) [left=of s2, shift={(0.7cm,0)}] {$s_1$};
        \node[state] (s4) [right=of s3, shift={(-0.7cm,0)}] {$s_4$};
        \node[state] (s5) [right=of s4, shift={(-0.7cm,0)}] {$s_5$};
        \node[state] (sc) [right=of s5, shift={(-0.7cm,0)}] {$s_{\clubsuit}$};

        \node[state] (st) [below right=of s2, shift={(-0.6cm,0)}] {$s_{\top}$};
        \node[state] (sb) [right=of st] {$s_{\bot}$};

        % Coordinate for incoming arrow
        \coordinate (start) at (0,0.7);

        % Transitions
        \path[->]
        (start) edge node {} (s0.north)
        (s0) edge [loop right, dashed] node {} (s0)
        
        (s0) edge node [left] {$\frac{20}{400}$} (s1)
        (s0) edge node [left] {$\frac{30}{400}$} (s2)
        (s0) edge node [left] {$\frac{40}{400}$} (s3)
        (s0) edge node [left] {$\frac{50}{400}$} (s4)
        (s0) edge node [left] {$\frac{60}{400}$} (s5)
        (s0) edge node [left] {$\frac{200}{400}$} (sc)

        (s1) edge node [left] {} (st)
             edge [dashed] node [left] {} (sb)
        (s2) edge node [left] {} (st)
             edge [dashed] node [left] {} (sb)
        (s3) edge node [left] {} (st)
             edge [dashed] node [left] {} (sb)
        (s4) edge node [left] {} (st)
             edge [dashed] node [left] {} (sb)
        (s5) edge node [left] {} (st)
             edge [dashed] node [left] {} (sb)
        (sc) edge [bend left] node [left] {} (sb)
             edge [dashed, bend left=50] node [left] {} (st);
\end{tikzpicture}
\end{center}
\caption{}
    \end{subfigure}
        \begin{subfigure}{0.1\textwidth}
        \centering
        \begin{center}
    \begin{tikzpicture}[shorten >=1pt, node distance=2cm, on grid, auto, initial text=, every state/.style={draw, minimum size=0.5cm}]
        % States
        
        \node[state] (n0) {$n_0$};
        \node[state] (n1) [below=of n0, yshift=0.5cm] {$n_1$};
        \node[state] (n2) [below=of n1] {$n_2$};
        
        % Transitions
        \path[->]

        % Draw an incoming arrow from above
    %\path[->] 
    (0,0.7) edge node {} (n0.north)
        (n0) edge [] node {$o_0: a$} (n1)
        (n1) edge [] node [align=center] {
            $o_1: a$ \\ 
            $o_2: a$ \\
            $o_3: a$ 
            } 
        (n2)
             edge [left] node [align=center] {
              $o_4: a$ \\
              $o_5: a$ \\
              $o_{\clubsuit}: b$  
            } (n2);
    \end{tikzpicture}
    % \begin{tikzpicture}[shorten >=1pt, node distance=2cm, on grid, auto, initial text=, every state/.style={draw, minimum size=0.5cm}]
    %     % States
    %     \node[state, initial] (n0) {$n_0$};
    %     \node[state] (n1) [right=of n0] {$n_1$};
    %     \node[state] (n2) [right=of n1] {$n_2$};
        
    %     % Transitions
    %     \path[->]

    %     % Draw an incoming arrow from above
    % \path[->] 
    %     (n0) edge [] node {$o_0: a$} (n1)
    %     (n1) edge [] node [align=center] {
    %         $o_1: a$ \\ 
    %         $o_2: a$ \\
    %         $o_3: a$ \\
    %         $o_4: a$ \\
    %         $o_5: a$
    %         } (n2)
    %          edge [below] node {$o_{\clubsuit}: b$} (n2)
    % \end{tikzpicture}
\end{center}
\caption{}
    \end{subfigure}

    \caption{ (a) An instance of the 0/1 knapsack problem. There are $5$ items with weights $W = [1, 2, 3, 4, 5]$ and values $V = [20, 30, 40, 50, 60]$. The capacity of the knapsack is $7$ (b) Optimal solution to the instance of the 0/1 knapsack problem. The knapsack's total weight is 7, and the total value is 100. (c) The POMDP of the instance of our problem, the \OSAtoDGMDEC problem, constructed by our reduction for the instance of the \KNAPSACK in Part (a) of this figure. The solid edges are transitions that take place with action $a$ and the dashed arrows are transitions for action $b$. All the transitions missing probability labels, use probability $1$. We omitted those labels to reduce visual clutter. States $s_{\bot}$ and $s_{\top}$ are absorbing states. Their outgoing transitions are omitted to reduce visual clutter. (d) The finite-state controller of the instance of our problem constructed by our reduction for the instance of the \KNAPSACK in Part (a) of this figure. All the missing transitions enters $n_2$ and choose action $b$. %, which do not affect the solution to the problem.
    }
    \label{fig:reduction}
\end{figure}
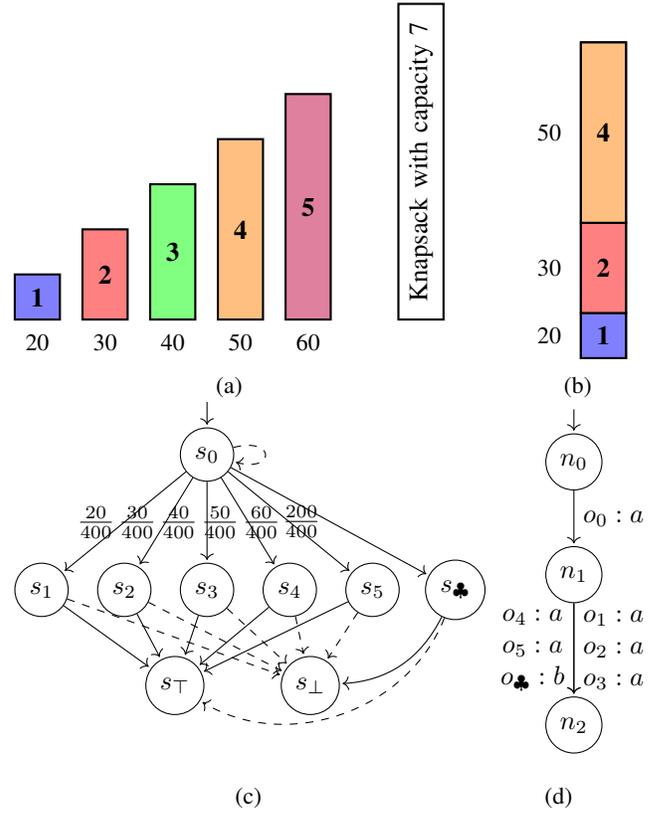

\end{proof}

Therefore, the following result is implied from Lemma~\ref{lem:np} and Theorem~\ref{thr:nphard}.
\begin{theorem}
    \OSAtoDGMDEC $\in \NP$-complete.
\end{theorem}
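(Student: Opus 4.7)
The plan is to observe that this theorem is essentially a bookkeeping corollary, obtained by assembling the two ingredients already proved in the excerpt. A decision problem is \NP-complete exactly when it lies in \NP and is \NP-hard, so the proof will simply invoke Lemma~\ref{lem:np} for the first condition and Theorem~\ref{thr:nphard} for the second, and then conclude by the definition of \NP-completeness. No new construction, no new reduction, and no additional combinatorial argument are needed.

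Concretely, I would structure the proof in three short steps. First, cite Lemma~\ref{lem:np}, which provides a polynomial-time verifier that, given a candidate sensor alteration $\alpha$, builds the product automaton $\mathcal{P}$ of Definition~\ref{def:prod_aut} and checks both $C(\alpha) \le B$ and $\textbf{Pr}_{\text{reach}}(S_D, \alpha) \ge r$ within time polynomial in the input size; this establishes \OSAtoDGMDEC $\in \NP$. Second, cite Theorem~\ref{thr:nphard}, whose polynomial-time reduction from \KNAPSACK witnesses \OSAtoDGMDEC $\in \NP$-hard. Third, conclude by invoking the standard definition: any problem that is simultaneously in \NP and \NP-hard is \NP-complete, which is precisely the statement to be proved.

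There is no real obstacle here; the only thing to be careful about is presentation. I would keep the proof to two or three sentences, avoid re-deriving any polynomial bound (since the verifier's runtime is already accounted for in Lemma~\ref{lem:np}), and avoid re-stating the reduction from Theorem~\ref{thr:nphard}. If space allows, I might add a one-line remark noting that the optimization version \OSAtoDGM is therefore at least as hard, since its decision variant \OSAtoDGMDEC is \NP-complete; but this is optional and not required by the theorem as stated.
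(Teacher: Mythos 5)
Your proposal is correct and matches the paper's own proof exactly: the paper also establishes this theorem by simply combining Lemma~\ref{lem:np} (membership in \NP) with Theorem~\ref{thr:nphard} (\NP-hardness) and invoking the definition of \NP-completeness. No differences worth noting.
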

\begin{proof}
    Combine Lemma \ref{lem:np} and Theorem~\ref{thr:nphard}.
\end{proof}

\begin{corollary}
    \OSAtoDGM $\in \NP$-hard.
\end{corollary}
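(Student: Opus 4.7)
The plan is to establish $\NP$-hardness of $\OSAtoDGM$ via a polynomial-time Turing reduction from its decision counterpart $\OSAtoDGMDEC$, which is already known to be $\NP$-hard by Theorem~\ref{thr:nphard}. Intuitively, an optimization oracle subsumes the threshold question: once the best achievable misleading probability under the budget $B$ is known, comparing it to $r$ decides the instance.

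Concretely, given any instance $\langle \mathcal{M}, S_D, \mathcal{C}, c, B, r\rangle$ of $\OSAtoDGMDEC$, I would first strip off the threshold $r$ to form the $\OSAtoDGM$ instance $\langle \mathcal{M}, S_D, \mathcal{C}, c, B\rangle$, and invoke the hypothetical polynomial-time oracle for $\OSAtoDGM$ on it to obtain an optimal alteration $\alpha^{*}$ with $C(\alpha^{*}) \leq B$ maximizing $\textbf{Pr}_{\text{reach}}(S_D, \alpha)$. Next, I would compute $p^{*} := \textbf{Pr}_{\text{reach}}(S_D, \alpha^{*})$ by constructing the product automaton of Definition~\ref{def:prod_aut} under $\alpha^{*}$ and solving the Bellman system described in Section~\ref{sec:pro}; this is the very same verification procedure used inside the proof of Lemma~\ref{lem:np}, and it runs in time polynomial in the input. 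Finally, I would answer \Yes iff $p^{*} \geq r$ and \No otherwise.

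Correctness is immediate from the optimality of $\alpha^{*}$: a feasible alteration $\alpha$ with $C(\alpha)\leq B$ and $\textbf{Pr}_{\text{reach}}(S_D, \alpha) \geq r$ exists if and only if $p^{*} \geq r$. I do not anticipate any serious obstacle in this plan; the only routine point to verify is that the oracle's output $\alpha^{*}$, being merely a function $\Omega \to \Omega$, has description size polynomial in $|\Omega|$, so both the oracle call and the follow-up verification stay polynomial in the size of the original $\OSAtoDGMDEC$ instance. Since the whole procedure outside the oracle call is polynomial, a polynomial-time algorithm for $\OSAtoDGM$ would yield one for $\OSAtoDGMDEC$, contradicting Theorem~\ref{thr:nphard} and thereby establishing the corollary.
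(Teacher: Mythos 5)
Your proof is correct and follows the standard (and here, the paper's implicit) reasoning: the paper states the corollary without proof, relying on exactly the fact that a polynomial-time solver for the optimization problem \OSAtoDGM would decide \OSAtoDGMDEC by computing the optimum's reaching probability and comparing it to $r$. Your Turing reduction, including the use of the product-automaton verification from Lemma~\ref{lem:np} to evaluate $\textbf{Pr}_{\text{reach}}(S_D, \alpha^{*})$, fills in that argument correctly.
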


Hence, under the assumption that $\P \neq \NP$, we cannot find in polynomial time, a sensor alteration with a given cost budget that maximizes the probability of misleading the robot to the decoy goal.
%
%
%In the next section provide a programming based solution for our problem.
%
\section{\OSAtoDGM via Mixed Integer Linear Programming}
\label{sec:milp}
In this section, we provide a mixed integer linear programming approach to solve the optimal sensor alteration problem.

The idea of our programming formulation is to have binary variables by assigning values to which, a sensor alteration is synthesized, and then to have certain variables each representing the probability of reaching to the decoy goal by the sensor alteration from a state of a product automaton. Each state of this product automaton is a tuple $(s, n, o)$ representing a situation when the POMDP is in state $s$, current memory node of the controller is $n$, and observation $O(s)$ is altered to $o$ by the sensor alteration.
%

% We re-define the product automaton with every state consisting of tuple of POMDP state $s \in S$, FSC node $n \in N$ and observation at state $s$, $O(s)$:
% \begin{definition}
% \label{def:prod_aut}
%     The product of POMDP $\mathcal{M}=(S, A, \mathbf{P}, s_0, \Omega, O)$ and finite-state controller $\mathcal{C} = (N, n_0, \gamma, \delta)$ under an unknown sensor alteration tuple $\mathcal{U}_{\mathcal{M}, \mathcal{C}, \alpha} = (Q, q_0, \mathbf{T}, Q_G)$ in which
%     \begin{itemize}
%         \item $Q = S \times N \times \Omega$ is the state space,
%         \item $q_0 = (s_0, n_0, O(s_0))$ is the initial state,
%         \item $Q_G = S_D \times N \times \Omega$ is the set of goal states, and
%         \item $\mathbf{T}: Q \times Q \rightarrow [0, 1]$ is the transition function such that for each states $(s, n, o), (s', n', o') \in Q$,
%         \begin{align}
%    \mathbf{T}((s, n, o), (s', n', o')) =\begin{cases}
% 			\mathbf{P}(s, s', \alpha(n, o)) & \text{if $\delta(n, o) = n'$ }\\
%             0 & \text{otherwise}.
% 		 \end{cases}
%         \end{align}
%     \end{itemize}
% \end{definition}

More precisely, we introduce the following variables:
\begin{itemize}
\item  A binary variable $x_{o, o'}$ for each $o, o' \in \Omega$: Variable $x_{o, o'}$ will receive $1$ iff observation $o$ is altered to observation $o'$ by the sensor alteration.
    \item A continuous variables $z_{s, n, o} \in [0, 1]$ for each  state $s \in S$, node $n \in N$, and observation $o \in \Omega$: The value of $z_{s, n, o}$ will be the probability of misleading the robot to the decoy goal when the system is in state $s$, the controller is in node $n$, and the observation emitted when the system is in $s$, $O(s)$, is altered to $o$ under the sensor alteration.
    % \begin{enumerate}
    %     \item For each $s \in S_D$, $n \in N$, and $o \in \Omega$, we set $z_{s, n, o} = 1$;
    % \end{enumerate}
\end{itemize}

Using these variables, we make the following programming model.
\begin{spacing}{0.5}
    \begin{mdframed}
        \noindent Maximize:\vspace{-0.5em}
        \begin{equation}\label{ilp:obj_2}
            z_{s_0, n_0, O(s_0)}
            \vspace{-0.5em}
        \end{equation}
        Subject to:
        \begin{itemize}
          \item 
          \begin{equation} \label{ilp:cost_budget}
              \sum_{o, o' \in \Omega} x_{o, o'} \cdot c(o, o') \leq B
          \end{equation}
            \item 
            \begin{equation}\label{ilp:initialstate_2}
                    x_{O(s_0), O(s_0)} = 1 
                \end{equation}
            
        % \item 
        %     \begin{equation}\label{ilp:threshold_2}
        %             z_{s_0, n_0, O(s_0)} \geq \lambda
        %         \end{equation}
            % \item For each $o \in \Omega$, 
            %     \begin{equation}\label{ilp:obs_mapping_2}
            %         \sum_{o' \in \Omega} x_{o, o'} = 1
            %     \end{equation}

            % \item For each $s \in S_D$, 
            % \begin{align}\label{ilp:decoy_2}
            %       x_{O(s), O(s)} = 1
            % \end{align}

            \item For each $s \in S_D$, $n \in N$, and $o \in \Omega$, 
            \begin{align}\label{ilp:decoy_2}
                  z_{s, n, o} = x_{O(s), o}
            \end{align}
            
            \item For each $s \in S \setminus S_D$, $n \in N$, and $o \in \Omega$
            \begin{align}\label{ilp:bellman_2}
    z_{s, n, o} = \sum_{\substack{s' \in S, \\ n' \in N, \\ o' \in \Omega}} 
     x_{O(s), o}  \mathbf{T}((s, n, o), (s', n', o')) 
     z_{s', n', o'}
\end{align}

\item For each $o, o' \in \Omega$, 
            \begin{equation}\label{ilp:x_binary}
               x_{o, o'} \in \{0, 1\}
        \end{equation}

        \item For each $s \in S$, $n \in N$, and $o \in \Omega$, 
            \begin{equation}\label{ilp:x_binary}
               z_{s, n, o} \in [0, 1]
        \end{equation}
            
        \end{itemize}
    %} }
    \end{mdframed}
    \end{spacing}
    \medskip

    The objective (\ref{ilp:obj_2}) is to maximize the probability of misleading the robot to the decoy goal $S_D$.
    Constraint (\ref{ilp:cost_budget}) ensures that the cost of the alteration is no greater than the cost budget $B$.
    Constraint (\ref{ilp:initialstate_2}) asserts that the observation associated with the initial state of the POMDP cannot be altered. This is because the robot know that any execution of the POMDP starts from $s_0$.
    Constraints (\ref{ilp:decoy_2}) sets the base case for the Bellman equation by setting to be $1$, the probability of reaching the decoy for a state $(s, n, o)$ where $s$ is a decoy state and observation $O(s)$ is altered to $o$ by the sensor alteration.
    Constraints (\ref{ilp:bellman_2}) simply implement the Bellman equation.

    This programming would be an MILP if Constraints (\ref{ilp:bellman_2}) were linear. Note that although $\mathbf{T}((s, n, o), (s', n', o'))$, because both $x_{O(s), o}$ and $z_{s', n', o'}$ variables, the product of these three is not a linear term. Thus, we need to linear these constraints.
    
    To linearize them, we first introduce a continuous variable $l_{s, o, s', n', o'}$ for each $s, s' \in S$, $n, n' \in N$, and $o, o' \in \Omega$. 
    The range of this variable is $[0, 1]$.
    Then, we introduce the following additional constraints to make $l_{s, o, s', n', o'}$ receive value $x_{O(s), o} \cdot z_{s', n', o'}$.
    \begin{spacing}{0.5}
    \begin{mdframed}
        \begin{itemize}
          \item For each $s, s' \in S$, $n' \in N$, and $o, o' \in \Omega$,
            \begin{equation}\label{ilp:linear_1}
                    l_{s, o, s', n', o'} \geq 0 
                \end{equation}

            \begin{equation}\label{ilp:linear_2}
                    l_{s, o, s', n', o'} \leq z_{s', n', o'} 
                \end{equation}

            \begin{equation}\label{ilp:linear_2}
                    l_{s, o, s', n', o'} \leq x_{O(s), o} 
                \end{equation}

            \begin{align}\label{ilp:linear_2}
                    l_{s, o, s', n', o'} \geq & z_{s', n', o'} - (1- x_{O(s), o})
                \end{align}
        \end{itemize}
    %} }
    \end{mdframed}
    \end{spacing}
    \medskip
    Finally, we replace Constraints~\ref{ilp:bellman_2} with the following constraints:
    \begin{spacing}{0.5}
    \begin{mdframed}
        \begin{itemize}
            \item For each $s \in S \setminus S_D$, $n \in N$, and $o \in \Omega$
            \begin{align}\label{ilp:bellman_2}
    z_{s, n, o} = \sum_{\substack{s' \in S, \\ n \in N, \\ o' \in \Omega}} 
       \mathbf{T}((s, n, o), (s', n', o')) \cdot l_{s, o, s', n', o'}
\end{align}
        \end{itemize}
    %} }
    \end{mdframed}
    \end{spacing}
    \medskip
    This MILP is complete, but needs to be improved.
    Note that given a pair of state $s, s' \in S$, the MILP introduces all variables $l_{s, \cdot, s', \cdot, \cdot}$ even if $\mathbf{P}(s, a, s') = 0$ for all $a \in A$.
    We improve the MILP by introducing those variables only when for at least an action $a \in A$, $\mathbf{P}(s, a, s') > 0$.
    , but if for all $a \in A$, it holds that $\mathbf{P}(s, a, s') = 0$, then we do not have to even create the variables $l_{s, \cdot, s', \cdot, \cdot}$ and the constrains involving them. 
    This MILP formulation not only can be used for computing optimal solutions to \OSAtoDGM but also for computing sub-optimal solutions for problem instances for which optimal solutions cannot be computed under limited time budgets. 
    The presented MILP can be directly solved by a variety of highly-optimized MILP solvers. 

\section{Case Studies}
\label{sec:case}
In section, we present results of our implementation of the MILP for several instances of the \OSAtoDGM problem. We implemented our program in Python and used the Python interface of Gorubi~\cite{gurobi} to solve the MILP instances. All experiments were performed on a system with Windows 11, Core i-9 (2000 Mhz)  processor, and 32GB memory.

\subsection{The Reduction Example}
Our implementation for the \OSAtoDGM instance in the reduction, Figure~\ref{fig:reduction}c, compute the sensor alteration in which
\[
 o_4 \rightarrow o_{\clubsuit}, o_2 \rightarrow o_{\clubsuit}, o_1 \rightarrow o_{\clubsuit}.
\]
The MILP for this instance had $59,373$ variables and $236,477$ constraints. It took $0.099$ for the solver to find an optimal solution.
This experiment verifies the correctness of our algorithm.
\subsection{Grid Environment}
This case study considers a robot operating in a $n \times n$ grid environment. An instance where $n=5$, is shown in Figure~\ref{fig:grid_1robot}. The robot is tasked with delivering an item from the starting position $(0, 0)$ to the goal position $(n-1, n-1)$.
The robot has four actions: $N$, $S$, $E$, $W$, which respectively command the robot's  actuators to move to the cell in the North, South, East, and West side of its' current position.
The robot's dynamic is stochastic: The actuators of the robot guarantee that they move the robot to the intended cell specified by the action command with probability $0.8$. 
The probability that the robot's actuators move the robot in either of the two unintended directions orthogonal to its current cell is $0.1$ each, provided movement in both directions is possible. Otherwise, if only one unintended direction is possible (for example, when robot is in one of the corner cells of the environment), the probability of movement in that direction is $0.2$.
% \sout{The probability that the robot's actuators move the robot the two unintended direction that are not in the intended direction or the opposite of the intended direction is $0.1$ for each if moving in both direction is possible, and if moving only any one of those direction is possible, e.g., the robot is in a corner, the probability that the robot's actuators move the robot to that unintended direction is $0.2$}.
%
Figure~\ref{fig:grid_1robot}(bottom) illustrate these two situations.
Generally, the robot is unaware about its own position, i.e., the current state is not observable, but using $7$ range sensors $s_0$ through $s_6$, it has partial observability of the current state of the world.
Each sensor $s_i$ produces an observation $o_i$ when the robot enters a cell guarded by $s_i$. Therefore, there are $7$ observations, each produced by a sensor, along with one additional observation, which we denoted $b$, produced by the cells that are not guarded by any sensors.
With these in mind, the problem is formulated as a POMDP, which has a state for each potential position of the robot---each cell in the grid.
Cell $(2, 2)$, which in the figure is denoted by $X$ in red, is hazardous, and a policy governing the robot movement should help the robot to not enter that cell.
In the POMDP, the states corresponding to the goal and the hazardous position are \emph{absorbing}—states that have self-loops for all actions.

The robot uses the finite-state controller shown in Figure~\ref{fig:grid_1robot}.
The controller tells the robot to do action $E$ when the robot receives any of the observations $o_0$, $o_1$, $o_4$, and $o_6$, and to do action $N$ when any of the observation $o_2$, $o_3$, and $o_4$, are received. The action produced for the blank observation, $b$, will be $E$ if the last non-blank observation was any of the observations $o_0$, $o_1$, $o_4$, and $o_3$, and otherwise, it will be $N$.

Our implementation indicated that the probability of reaching the goal position using this controller without sensor alteration (i.e., a sensor alteration with a cost budget $0$) for the instance with $n=5$ is $0.915$ and the probability of entering the hazardous position is $0.085$.
The purpose of the attacker is to mislead the robot to enter the hazardous state, and hence, the decoy goal contains only that state.
For a cost budget $1$, our implementation of the MILP formulation computed the sensor alteration that changes $o_1$ to $o_3$. This is consistent with the problem instance because altering $o_1$ to $o_3$  maximizes the likelihood of misleading the robot into entering the decoy when only one observation can be altered.
Our program indicated that the probability of reaching the decoy under this sensor alteration is $0.720$, which is significantly higher than $0.085$, the probability of misleading without a sensor attack. To interpret the impact of this sensor alteration, consider that under this sensor alteration, the probability of following the path $(0, 0) \rightarrow (1, 0) \rightarrow (2, 0) \rightarrow (2, 1) \rightarrow (2, 2) \rightarrow (2, 2)$, alone is $0.8^4 = 0.4096$.

\begin{figure}[h]
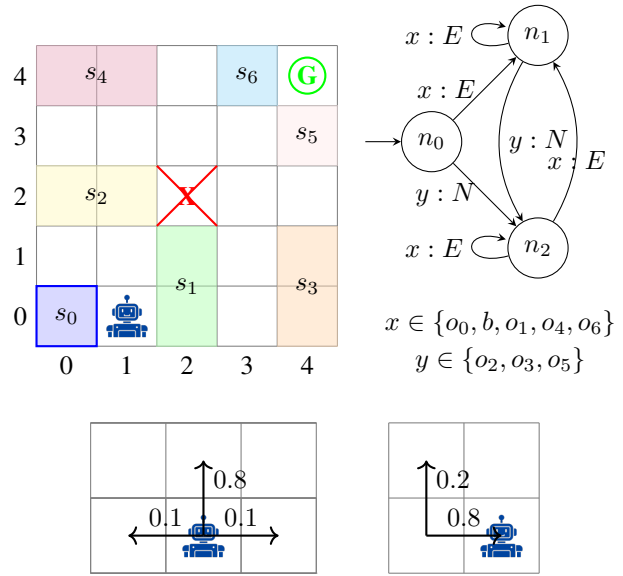

    \centering
    \begin{tikzpicture}[scale=0.8]
                % Draw grid
                \foreach \x in {0,...,5} {
                    \draw[gray] (\x,0) -- (\x,5);
                    \draw[gray] (0,\x) -- (5,\x);
                }
                %
                % Label axes
                \foreach \x in {0,...,4} {
                    \node[below] at (\x+0.5,0) {\x};
                    \node[left] at (0,\x+0.5) {\x};
                }
                %
                % Draw sensors with different colors and transparency
                \fill[blue!30,opacity=0.5] (0,0) rectangle (1,1); % s0
                \fill[green!30,opacity=0.5] (2,0) rectangle (3,2); % s1
                \fill[yellow!30,opacity=0.5] (0,2) rectangle (2,3); % s2
                \fill[orange!30,opacity=0.5] (4,0) rectangle (5,2); % s3
                \fill[purple!30,opacity=0.5] (0,4) rectangle (2,5); % s4
                \fill[pink!30,opacity=0.5] (4,3) rectangle (5,4); % s5
                \fill[cyan!30,opacity=0.5] (3,4) rectangle (4,5); % s6
                %
                % Mark hazardous cell with a cross
                \draw[red, thick] (2,2) -- (3,3);
                \draw[red, thick] (2,3) -- (3,2);
                %
                % Mark goal cell with a circle
                \draw[green, thick] (4.5,4.5) circle(0.3);
                %
                % Mark start cell with a bold border
                \draw[blue, thick] (0,0) rectangle (1,1);
                %
                % Draw sensor labels
                \node at (0.5,0.5) {$s_0$};
                \node at (2.5,1) {$s_1$};
                \node at (1,2.5) {$s_2$};
                \node at (4.5,1.0) {$s_3$};
                \node at (1,4.5) {$s_4$};
                \node at (4.5,3.5) {$s_5$};
                \node at (3.5,4.5) {$s_6$};
                %
                % Label hazardous, start, and goal cells
                \node[red] at (2.5,2.5) {\textbf{X}};
                %\node[blue, above] at (0.5,1) {\textbf{S}}; % Move start label above s0
                \node[green] at (4.5,4.5) {\textbf{G}};

                % Draw a small robot in cell (1,1)
                %\fill[black] (1.5, 0.5) circle(0.2);
                %\node[white] at (1.5, 0.5) {\textbf{R}};
                \node at (1.5,0.5) {\includegraphics[width=0.6cm]{pics/robot.png}};
            \end{tikzpicture}
            \begin{tikzpicture}[->, >=stealth, initial text=, node distance=2cm, every state/.style={draw, minimum size=0.8cm}]
                % Nodes
                \node[state, initial] (n1) {$n_0$};
                \node[state, above right of=n1] (n2) {$n_1$};
                \node[state, below right of=n1] (n3) {$n_2$};
                 \node[below of=n3, xshift=-0.5cm, yshift=1.0cm] (nx) {$x \in \{o_0, b, o_1, o_4, o_6\}$};
                 \node[below of=nx, yshift=1.5cm] (ny) {$y \in \{o_2, o_3, o_5\}$};
                
                % Transitions from n1
                \path 
                (n1) edge [left] node {$x: E$} (n2)
                     edge [left] node {$y: N$} (n3);
                
                % Transitions from n2
                \path
                (n2) edge[loop left] node {$x: E$} (n2)
                     edge [bend right, right] node {$y: N$} (n3);
                
                % Transitions from n3
                \path
                (n3) edge[loop left] node {$x: E$} (n3)
                     edge [bend right] node[below] {$x: E$} (n2);
            \end{tikzpicture}

            \vspace{15pt}
            \begin{tikzpicture}
                % Draw grid
                \foreach \x in {0,...,3}
                    \foreach \y in {0,...,2}
                {
                    \draw[gray] (\x,0) -- (\x,\y);
                    \draw[gray] (0,\y) -- (3,\y);
                }
                
                % Draw a small robot in the corner (cell 0,0)
                % \fill[black] (1.5,0.5) circle(0.2);
                % \node[white] at (1.5,0.5) {\textbf{R}};
                % \node[black] at (1.5,-0.5) {\textbf{Action: $N$}};

                \node at (1.5,0.5) {\includegraphics[width=0.6cm]{pics/robot.png}};
                
                % Probabilistic movement arrows
                \draw[thick, ->] (1.5,0.5) -- (1.5,1.5) node[midway,above right] {$0.8$};
                \draw[thick, ->] (1.5,0.5) -- (2.5,0.5) node[midway,above] {$0.1$};
                \draw[thick, ->] (1.5,0.5) -- (0.5,0.5) node[midway,above] {$0.1$};
            \end{tikzpicture}
            \hspace{20pt}
            \begin{tikzpicture}
                % Draw grid
                \foreach \x in {0,...,2}
                {
                    \draw[gray] (\x,0) -- (\x,2);
                    \draw[gray] (0,\x) -- (2,\x);
                }
                
                % Draw a small robot in the corner (cell 0,0)
                % \fill[black] (0.5,0.5) circle(0.2);
                % \node[white] at (0.5,0.5) {\textbf{R}};
                % \node[black] at (1.5,-0.5) {\textbf{Action: $E$}};
                \node at (1.5,0.5) {\includegraphics[width=0.6cm]{pics/robot.png}};
                
                % Probabilistic movement arrows
                \draw[thick, ->] (0.5,0.5) -- (0.5,1.5) node[midway,above right] {$0.2$};
                \draw[thick, ->] (0.5,0.5) -- (1.5,0.5) node[midway,above] {$0.8$};
                
            \end{tikzpicture}
    \caption{\textbf{Top-left)} A grid environment guarded by $7$ range sensors $s_0$ through $s_7$. The robot is tasked to deliver an item from the starting location $(0, 0)$ to the goal location, $(4, 4)$. Cell $(2, 2)$ is hazardous and must be avoided. That cell considered a decoy and the attacker's purpose is to mislead the robot to that cell. \textbf{Top-right)} A finite-state controller the robot uses. \textbf{Bottom-left)} The robot's dynamic when it performs action N, standing for going to North. \textbf{Bottom-right)} The robot's dynamic when it performs action E, standing for going to East.}
    \label{fig:grid_1robot}
\end{figure}

We repeated the experiment for other cost budgets greater than $1$.
For a cost budget of $2$, our implementation computed to do the alteration $o_1 \rightarrow o_5$ and $o_2 \rightarrow o_0$. This is consistent with the positions of $s_1$ and $s_2$ and the action produced for $o_1$, $o_5$, $o_2$, and $o_0$ by the controller.
The probability of misleading the robot to the decoy under this sensor alteration is $0.861$.
For a cost budget of $3$, our algorithm decided to alter $o_6$ to $o_2$, along with $o_1 \rightarrow o_5$ and $o_2 \rightarrow o_0$. Considering the stochastic nature of the robot's dynamics and controller, altering $o_6$ affects the robot's decision by causing it to take action $N$ instead of $E$ upon observing $o_6$. As a result, the robot may be directed to move through the blank cells to the left and bottom-left of $s_6$ before entering the hazardous state, which could be reached by an infinite number of paths. The probability of following those paths is $\sum_{n=1}^\infty 0.001(0.01)^n \approx 0.00101$. Our implementation indicated that under the computed sensor alteration, the probability of reaching the decoy is $0.862$, rounded to three decimal places.
For a cost budget $4$, the computed sensor alteration included $o_2 \rightarrow o_6$, $o_5 \rightarrow o_1$, $o_1 \rightarrow o_2$, and $o_6 \rightarrow o_3$. Under this alteration, the probability of reaching the decoy was $0.864$ (rounded up).
This probability remained the same for cost budgets greater than $4$. In each case, the actions assigned by the computed sensor alteration for the observations produced by sensors $s_4$ and $s_3$ did not changed. This is reasonable because for example, there is no advantage in choosing action $N$ instead of $E$ upon observing $o_4$.

We performed a scalability experiment by creating $6$ instances of an $n \times n$ grid for $n \in \{5, 15, 25, 35, 45\}$. Each instance maintained the same topology as the case with $n=5$, shown in Figure~\ref{fig:grid_1robot}, the same number of sensors as and the same sensor range sizes.
For each we measured the number variables of the MILP, the number of constraints of the MILP, the time to construct the MILP, and the time to solve the MILP.
Results of this experiment are shown in Figure~\ref{fig:scalability}. 
As expected, both computation and execution times were increased as the size of the grid increased. 
%This is primarily due to the corresponding growth in the number of variables and constraints.
%

\begin{figure}[h!]
\begin{tikzpicture}
    \begin{axis}[
        width=4.8cm,
        height=4cm,
        xlabel={$n$},
        ylabel={Num. of Variables},
        grid=major,
        yticklabel style={anchor=north east},
        y label style={yshift=-15pt}
    ]
    \addplot[red, mark=square*] table[x=n, y=num_vars, col sep=comma] {data.csv};
    \end{axis}
\end{tikzpicture}
\begin{tikzpicture}
    \begin{axis}[
        width=4.8cm,
        height=4cm,
        xlabel={$n$},
        ylabel={Num. of Constraints},
        grid=major,
        yticklabel style={anchor=north east},
        y label style={yshift=-15pt}
    ]
    \addplot[blue, mark=triangle*] table[x=n, y=num_consts, col sep=comma] {data.csv};
    \end{axis}
\end{tikzpicture}
\begin{tikzpicture}
    \begin{axis}[
        width=4.6cm,
        height=4cm,
        xlabel={$n$},
        ylabel={Constr. Time (seconds)},
        grid=major,
        legend pos=north west,
        y label style={yshift=-10pt}
    ]
    \addplot[orange, mark=diamond*] table[x=n, y=const_time, col sep=comma] {data.csv};
    %\addlegendentry{Construction Time}
    \end{axis}
\end{tikzpicture}
\begin{tikzpicture}
    \begin{axis}[
        width=4.8cm,
        height=4cm,
        xlabel={$n$},
        ylabel={Exec. Time (seconds)},
        grid=major,
        yticklabel style={anchor=north east},
        y label style={yshift=-15pt}
    ]
    \addplot[green, mark=o] table[x=n, y=exec_time, col sep=comma] {data.csv};
    \end{axis}
\end{tikzpicture}
\caption{Results of our scalability experiment for grids similar to the grid in Figure~\ref{fig:grid_1robot}a of size $n \times n$, $n \in \{5, 15, 25, 35, 45\}$.
Note that for the top two graphs, the values on the y-axis are in millions (e.g., the instance for $n = 45$ has more than $1.6$ million variables and more than $6.2$ million constraints).
} 
\label{fig:scalability}
\end{figure}
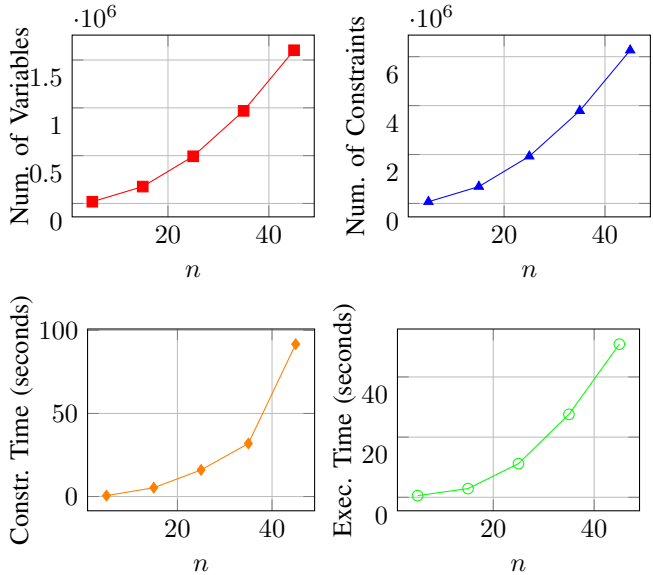
\section{Conclusion}
\label{sec:con}
In this paper, we studied a sensor deception problem the aim of which is to find a sensor alteration that can maximize the probability of misleading an agent to a predetermined decoy goal, under limited capability for sensor alteration. 
The environment is modeled by a POMDP and the agent's actions are governed by a Finite State Controller (FSC). 
%The stochastic environment, influenced by the agent's actions, is modeled as a Partially Observable Markov Decision Process (POMDP), while the agent's actions are governed by a Finite State Controller (FSC). 
We proved that our problem is $\NP$-hard, and provided an algorithm based on MILP. We showed through experiments that our algorithm is capable of computing optimal solutions for problems of moderate size.
%
%We efficiently compute the probability of sensor alteration by constructing the product of the POMDP and FSC and introduce a novel Mixed Integer Linear Programming (MILP) formulation based on this construction. The product automaton is utilized to show the hardness of the sensor alteration problem along with a reduction from Knapsack problem. Our experimental analysis validates the proposed solution in a $5 \times 5$ grid environment with $7$ sensors and varying cost budgets. Furthermore, we examine the scalability of our MILP approach as the grid size increases.
Future work can focus on improving the MILP or introducing a new one with less variables and constraints. The standard techniques to deal with $\NP$-hard problems, such as introducing hueristic and approximate solutions, as wells as identifying instances that can be solved in polynomial time, can be considered for future work. Also, it might be useful to study the problem for more general kinds of models and strategies. 
%\pagebudget{1}
\bibliography{main}

%\showtotalpagebudget

\end{document}